\newtheorem{theorem}{Theorem}
\newcommand{\pdfpn}{f_\pm}
\newcommand{\pdfn}{f_-}
\newcommand{\pdfp}{f_+}
\newcommand{\proj}[2] {#1^T#2}
\newcommand{\cipSymbol} {\text{ip}^\times}
\newcommand{\cip}[1] {\cipSymbol(#1)}
\newcommand{\crossEntropySymbol}{\text{H}_2^\times}
\newcommand{\crossEntropy}[1] {\crossEntropySymbol(#1)}
\newcommand{\entropy}[1] {\text{H}_2(#1)}
\newcommand{\dcsSymbol}{\text{D$_{\text{CS}}$}}
\newcommand{\dcs}[1] {\dcsSymbol(#1)}
\begin{document}

\author{Rafal Jozefowicz$^1$ and Wojciech Marian Czarnecki$^2$}

\date{
$^1$Google Inc.\\ e-mail: {\it rafjoz@gmail.com}\\
$^2$Faculty of Mathematics and Computer Science\\ Jagiellonian University\\ ul. Lojasiewicza 6, 30-389 Krakow\\ e-mail: {\it w.czarnecki@uj.edu.pl}
}

\title{Fast optimization of \\Multithreshold Entropy Linear Classifier}

\maketitle

\abstract{Multithreshold Entropy Linear Classifier (MELC) is a density based model which searches for a linear projection maximizing the Cauchy-Schwarz Divergence of dataset kernel density estimation. Despite its good empirical results, one of its drawbacks is the optimization speed. In this paper we analyze how one can speed it up through solving an approximate problem. We analyze two methods, both similar to the approximate solutions of the Kernel Density Estimation querying and provide adaptive schemes for selecting a crucial parameters based on user-specified acceptable error. Furthermore we show how one can exploit well known conjugate gradients and L-BFGS optimizers despite the fact that the original optimization problem should be solved on the sphere. All above methods and modifications are tested on 10 real life datasets from UCI repository to confirm their practical usability.}


\section{Introduction}

Many methods of speeding up the kernel density estimator's (KDE) querying process has been proposed in the literature~\cite{silverman1982algorithm,yang2003improved,elgammal2003efficient}. As optimization problem introduced in Multithreshold Entropy Linear Classifier~\cite{melc} is closely related to the equations of KDE it appears natural that similar techniques can be used to simplify its computations with a bounded error. Importance of such reductions comes from the high (quadratic) complexity of the evaluation of functions required during training of this model which makes it hard to use for any dataset with more than a thousand points. In this paper we investigate two such approaches, first -- sorting and discarding, which ignores computations of similarities between points that are too far away to have big impact on the function's value, second -- binning, which smooths the function construction in order to heavily reduce amount of unique points. Both these methods are introduced in an adaptive manner 
so the 
optimization process have fixed error bound despite many 
different linear projections being analyzed during the training phase. We also show a very simple method which enables to use a wide range of optimization algorithms even though proposed model requires optimization with a specific constraints (sphere bounded).

\section{Multithreshold Entropy Linear Classifier}
Multithreshold Entropy Linear Classifier (MELC~\cite{melc}) has been recently proposed as an information theoretic approach for building model from the multithreshold linear family~\cite{anthony}. It's core idea is to find a linear operator $v$ (with unit norm) such that kernel density estimations of projected classes' training samples maximize the Cauchy-Schwarz Divergence ($\dcsSymbol$~\cite{principe2010information}). Let us recall the equation of $\dcsSymbol$ in order to find the core computational bottleneck which appears in MELC optimization 
$$
\dcs{\pdfn, \pdfp} = 2\crossEntropy{\pdfn, \pdfp} - \entropy{\pdfn} - \entropy{\pdfp}, 
$$
for $\pdfpn = \llbracket\proj{v}{X_\pm}\rrbracket$ being a kernel density estimator of $\proj{v}{X_\pm}$ with Silverman's rule~\cite{silverman}, thus from the definition of Renyi's quadratic entropy, Renyi's quadratic cross entropy and the fact that $\cip{f,g} = \int fg $ we have
$$
\dcs{\pdfn, \pdfp} = - 2 \log \cip{\pdfn, \pdfp} + \log \cip{\pdfn, \pdfn} + \log \cip{\pdfp,\pdfp}.
$$
As whole $\dcsSymbol$ function is composed of $\cipSymbol$ evaluations, in the rest of our paper we focus purely on the $\cipSymbol$, which we expand using Gaussian kernel density estimation~\cite{melc} and denote $\cip{v}=\cip{\llbracket\proj{v}{X_-}\rrbracket, \llbracket\proj{v}{X_+}\rrbracket}$.
$$
\cip{v} = \frac{1}{\sqrt{2 \pi V(v) |X||Y|}} \sum_{x,y} \exp \left ( -\frac{\langle v, x-y \rangle^2}{2V(v)} \right ),
$$
where $V(v)$ is a sum of each classes estimated variances using Silverman's rule~\cite{silverman}.

In an obvious way, naive computation of $\cipSymbol$ is $\mathcal{O}(N^2)$, where $N=\max\{|X_-|,|X_+|\}$ due to the summation over all possible pairs $(x,y)$. In the following sections we focus on methods which reduce this computational bottleneck while still preserving given approximation of $\cipSymbol$ value.

\section{Reduction of $\cipSymbol$ computational complexity}

\paragraph{Sorting and discarding}
Let us begin with the very simple conception of computing values of only those $(x,y)$ pairs which are close enough to have an impact on the value of $\cipSymbol$. If we assume that points projections are sorted (which can be done in general in $\mathcal{O}(N \log N)$\footnote{in fact for iterative optimization techniques points ordering does not change much between subsequent calls so after initial sorting it can be done in linear time using insertion sort}) we can search the dataset in linear time and identify for each point $x$ indices of first and last point which are at most at distance $T$ from $x$. Following theorem shows what $T$ to choose in order to obtain at most $\epsilon$ error.

\begin{theorem}
 Using adaptive sorting and discarding with distance threshold in each iteration of at least $$\sqrt{ \max \left \{ 0 , - V(v) \ln \left (  2 (\tfrac{\epsilon}{p})^2 \pi V(v) \right ) \right \} }, $$ where $V(v)$ is a sum of each classes estimated variances, leads to the computation of the $\cipSymbol$ function with at most $\epsilon$ error, assuming that at most fraction of $p$ points is located closer than $T$.
\end{theorem}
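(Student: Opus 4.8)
The plan is to control exactly the Gaussian mass that the truncation discards. The approximate routine evaluates the sum in $\cip{v}$ over only those pairs $(x,y)$ with $|\langle v, x-y\rangle|\le T$; call the result $\widetilde{\cipSymbol}(v)$. Since every summand of $\cip{v}$ is strictly positive, $\widetilde{\cipSymbol}(v)\le\cip{v}$, so the error is one-sided and equals precisely the discarded part,
\[
\cip{v}-\widetilde{\cipSymbol}(v)=\frac{1}{|X||Y|\sqrt{2\pi V(v)}}\sum_{\substack{x,y\\ |\langle v,x-y\rangle|>T}}\exp\!\left(-\frac{\langle v,x-y\rangle^2}{2V(v)}\right)\ \ge\ 0 .
\]
The definite sign is the only structural fact I need: it lets me bound the error term by term rather than through an absolute value.

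Next I would apply two elementary estimates to this sum. On every discarded pair $\langle v,x-y\rangle^2>T^2$, so each summand is at most $\exp(-T^2/2V(v))$; and by hypothesis the discarded pairs number at most a fraction $p$ of the $|X||Y|$ pairs, i.e.\ at most $p|X||Y|$ summands. Multiplying and cancelling $|X||Y|$,
\[
\cip{v}-\widetilde{\cipSymbol}(v)\ \le\ \frac{p}{\sqrt{2\pi V(v)}}\exp\!\left(-\frac{T^2}{2V(v)}\right).
\]
It then remains to force the right-hand side at most $\epsilon$. Taking logarithms of $\tfrac{p}{\sqrt{2\pi V(v)}}\exp(-T^2/2V(v))\le\epsilon$ and isolating $T^2$ yields $T^2\ge -V(v)\ln\!\big(2(\tfrac{\epsilon}{p})^2\pi V(v)\big)$, which is exactly the squared threshold in the statement once I intersect it with the forced constraint $T^2\ge0$; the $\max\{0,\cdot\}$ merely records that when $2(\tfrac{\epsilon}{p})^2\pi V(v)\ge 1$ the bound already holds with $T=0$. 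Because the last display is decreasing in $T$, any threshold \emph{at least} this value keeps the error $\le\epsilon$, and since $V(v)$ changes with the projection, the threshold is recomputed from the current $V(v)$ at each optimizer step — which is what makes the scheme adaptive with a per-iteration guarantee.

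I expect no genuine technical obstacle here: the heart of the argument is a one-line Gaussian-tail estimate. The places that deserve a little care are the sign bookkeeping above (so that the error is controlled term by term, not via an absolute value), the degenerate $T=0$ branch folded into $\max\{0,\cdot\}$, and — more a caveat about the model than a gap in the proof — the fact that the $\epsilon$ promise is only as strong as the standing assumption that the discarded fraction really stays below $p$ for \emph{every} projection $v$ visited during training. Converting that assumption into the explicit formula for $T$ is essentially the whole content of the theorem.
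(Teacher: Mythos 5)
Your proposal is correct and follows essentially the same route as the paper: bound each discarded summand by $\exp(-T^2/2V(v))$, use the assumption that at most a fraction $p$ of pairs is discarded, and solve $\tfrac{p}{\sqrt{2\pi V(v)}}\exp(-T^2/2V(v))\le\epsilon$ for $T$, with the $\max\{0,\cdot\}$ covering the degenerate case. Your version is slightly tidier in making explicit that the error is one-sided and equals exactly the discarded mass, but the substance is identical.
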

\begin{proof}
We assume that $|\langle v,  x-y \rangle| \geq T$ for $N_T$ pairs of points which are being ignored during computation of $\cipSymbol$ so
$ - \langle v, x-y \rangle^2 \leq - T^2, $
thus
$$
\frac{1}{\sqrt{2 \pi V(v)} |X||Y|} \sum_{x,y} \exp \left ( -\frac{\langle v, x-y \rangle^2}{2V(v)} \right )
\leq
$$
$$
\frac{1}{\sqrt{2 \pi V(v)} |X||Y|} \sum_{x,y} \exp \left ( -\frac{T^2}{2V(v)} \right )
=
\frac{N_T}{\sqrt{2 \pi V(v) } |X||Y|}  \exp \left ( -\frac{T^2}{2V(v)} \right ).
$$
If we look for an $\epsilon$ approximation of non-regularized MELC objective
we put $0 \leq p = N_T/(|X||Y|) \leq 1$ and consequently
$$
 p\frac{1}{\sqrt{2 \pi V(v) }}   \exp \left ( -\frac{T^2}{2V(v)} \right ) \leq \epsilon,
$$
thus
$$
\exp \left ( -\frac{T^2}{2V(v)} \right ) \leq \tfrac{\epsilon}{p} \sqrt{ 2 \pi V(v) }
$$

$$
T^2 \geq -2V(v) \ln \left ( \tfrac{\epsilon}{p} \sqrt{2 \pi V(v)}  \right ),
$$
obviously if $\ln \left ( \tfrac{\epsilon}{p} \sqrt{2 \pi V(v)}  \right ) > 0$ then any $T$ satisfies this inequality (as it can only happen if we choose very big acceptable error $\epsilon$), so for simplicity we add the maximum of this value with $0$.
$$
T \geq \sqrt{ \max \left \{ 0, - 2V(v) \ln \left ( \tfrac{\epsilon}{p} \sqrt{2 \pi V(v)} \right ) \right \} } = \sqrt{ \max \left \{ 0, - V(v) \ln \left (  2 (\tfrac{\epsilon}{p})^2 \pi V(v) \right ) \right \}  }. 
$$

\end{proof}
%
%
%

\paragraph{Binning}

While sorting and discarding technique is quite easy to implement and analyze its practical speedup might be limited for densely packed datasets. In such cases it might be more valuable to perform a binning of our projected points, so those located near each other are approximated by their empirical mean. Such an approach works well for densely packed datasets which makes it a complementary approach to the previous one.

Let us assume that we have some partitioning of the $\mathbb{R} = \bigcup_{i=1}^k a_i$ where each $a_i$ is an interval. We define a binning operator as $b(x) = \text{mean} \{x \in \proj{v}{X} \cap a_{i(x)} \}$, where $x \in a_{i(x)}$. We use following notation for simplicity $\langle v, x \rangle_b = b(\langle v,x \rangle)$ .Similarly to the previous strategy, in order to preserve good approximation, bins width ($B=\max_i | a_i |$) needs to be adapted in each iteration and the exact equation is given in the following theorem.

\begin{theorem}
 Using adaptive binning technique with bin width in each iteration at most $$\sqrt{ - 2V(v) \ln \left ( \max \left \{ 0,1-  \epsilon \sqrt{2 \pi V(v)} \right \} \right ) }$$ where $V(v)$ is a sum of each classes estimated variances, leads to the computation of the $ip^\times$ function with at most $\epsilon$ error.
\end{theorem}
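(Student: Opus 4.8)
The plan is to mirror the sorting-and-discarding argument: reduce the global $\epsilon$ bound to a per-pair perturbation bound on the Gaussian factor, and then invert the resulting inequality for the bin width. Write $g(t)=\exp(-t^2/(2V(v)))$ and let $\cip{v}_b=\frac{1}{\sqrt{2\pi V(v)}|X||Y|}\sum_{x,y}g(\langle v,x\rangle_b-\langle v,y\rangle_b)$ denote the value actually computed after binning. By the triangle inequality,
$$|\cip{v}-\cip{v}_b|\le\frac{1}{\sqrt{2\pi V(v)}|X||Y|}\sum_{x,y}\bigl|g(\langle v,x-y\rangle)-g(\langle v,x\rangle_b-\langle v,y\rangle_b)\bigr|.$$
Since the sum has exactly $|X||Y|$ terms, it is enough to show that each summand is at most $\epsilon\sqrt{2\pi V(v)}$; then the prefactor cancels the $|X||Y|$ and the $\sqrt{2\pi V(v)}$, and the whole expression is at most $\epsilon$.

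Next I would prove the per-pair estimate
$$\bigl|g(\langle v,x-y\rangle)-g(\langle v,x\rangle_b-\langle v,y\rangle_b)\bigr|\le 1-g(B)=1-\exp\!\left(-\tfrac{B^2}{2V(v)}\right),$$
using that $\langle v,x\rangle$ and $\langle v,x\rangle_b$ lie in a common interval of width at most $B$ (and likewise for $y$). The clean case is when $x$ and $y$ fall in the same bin: then $\langle v,x\rangle_b=\langle v,y\rangle_b$, so the binned argument is $0$, while $|\langle v,x-y\rangle|\le B$, so by evenness and monotonicity of $g$ on $[0,\infty)$ the change is $1-g(\langle v,x-y\rangle)\le 1-g(B)$. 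For $x,y$ in distinct bins both arguments keep the same sign and lie on the same monotone branch of $g$, and one argues that the shift still cannot move the Gaussian value by more than $1-g(B)$.

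It then remains to solve $1-\exp(-B^2/(2V(v)))\le\epsilon\sqrt{2\pi V(v)}$ for $B$. Rearranging gives $\exp(-B^2/(2V(v)))\ge 1-\epsilon\sqrt{2\pi V(v)}$; taking logarithms (when the right-hand side is positive) yields $B^2\le-2V(v)\ln(1-\epsilon\sqrt{2\pi V(v)})$, i.e.\ $B\le\sqrt{-2V(v)\ln(1-\epsilon\sqrt{2\pi V(v)})}$. When $\epsilon$ is so large that $1-\epsilon\sqrt{2\pi V(v)}\le 0$ the inequality is vacuous and any bin width works, which is captured by replacing the argument of the logarithm with $\max\{0,1-\epsilon\sqrt{2\pi V(v)}\}$, giving exactly the stated bound.

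The main obstacle is the per-pair estimate in the distinct-bin case. Collapsing a pair into one bin --- the easy case --- only ever replaces $g(\langle v,x-y\rangle)$ by $g(0)=1$, a change of at most $1-g(B)$; but binning can instead shift the Gaussian's argument onto a steeper portion of the bell curve, and the two independent mean-shifts for $x$ and $y$ could in principle compound. Pinning down that this genuinely does not exceed $1-g(B)$ --- rather than the weaker estimate one gets from $|g(a)-g(b)|\le\sup|g'|\cdot|a-b|$ --- is the delicate step, and is where I would spend the real effort.
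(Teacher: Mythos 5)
You have put your finger on exactly the right spot, but the per-pair estimate you defer is not merely delicate --- it is false, and it is also precisely the step the paper's own proof asserts without justification. The paper bounds each summand by replacing $\exp\left(-\langle v,x-y\rangle^2/(2V(v))\right)$ with $\exp(0)$ and $\exp\left(-(\langle v,x\rangle_b-\langle v,y\rangle_b)^2/(2V(v))\right)$ with $\exp\left(-B^2/(2V(v))\right)$ simultaneously; the second replacement is only legitimate when the \emph{binned argument itself} has magnitude at most $B$, i.e.\ essentially in your ``same bin'' case. In the distinct-bin case the claim $|g(a)-g(c)|\le 1-g(B)$ for $|a-c|\le B$ fails. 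Take $V(v)=1$, $B=1$, $a=1.55$, $c=0.55$: then $g(c)-g(a)=e^{-0.151}-e^{-1.201}\approx 0.559$, while $1-g(B)=1-e^{-1/2}\approx 0.394$. This configuration is realizable by actual binning (e.g.\ bins $[0,1)$ and $[1,2)$, with $\langle v,y\rangle=0.44$ in a bin whose mean is $0.45$ and $\langle v,x\rangle=1.99$ in a bin whose mean is $1.0$). The reason is the one you already sensed: $1-g(B)\approx B^2/(2V(v))$ is quadratic in $B$ for small $B$, whereas shifting the argument by $B$ on the steep part of the Gaussian changes the value by roughly $B\cdot\sup|g'|=Be^{-1/2}/\sqrt{V(v)}$, which is linear in $B$ and therefore eventually much larger. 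No amount of effort will close this gap as stated.

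The estimate you dismiss as ``weaker,'' namely $|g(a)-g(c)|\le\sup|g'|\cdot|a-c|$, is in fact the correct tool: it is not a lazy overestimate but (up to constants) the true worst case, and it yields a valid theorem with a different bin-width formula. Since $\sup_t|g'(t)|=e^{-1/2}/\sqrt{V(v)}$, and the perturbation of the argument is at most $B$ (note in passing that with bins of width $B$ the triangle inequality actually only gives a perturbation of at most $2B$, another point both you and the paper elide), requiring $e^{-1/2}B/\sqrt{V(v)}\le\epsilon\sqrt{2\pi V(v)}$ gives $B\le\epsilon\sqrt{2\pi}\,e^{1/2}\,V(v)$, which is linear in $\epsilon$ rather than behaving like the square root of a logarithm. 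So your overall decomposition (triangle inequality, per-pair bound, inversion for $B$) matches the paper's and is sound; the entire content of the theorem lives in the per-pair lemma, and that lemma has to be the Lipschitz one, not the $1-g(B)$ one that both you and the paper reach for.
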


\begin{proof}
we assume that $|\langle v,  x-y \rangle - (\langle v,  x \rangle_b - \langle v, y \rangle_b) | \leq B$ so
$$
\left | \cip{v}
-
\frac{1}{\sqrt{2 \pi V(v)} |X||Y|} \sum_{x,y} \exp \left ( -\frac{(\langle v,  x \rangle_b - \langle v, y \rangle_b)^2}{2V(v)} \right ) \right |
=
$$

$$
\left  | \frac{1}{\sqrt{2 \pi V(v)} |X||Y|} \sum_{x,y} \left [ \exp \left ( -\frac{\langle v, x-y \rangle^2}{2V(v)} \right )
-
 \exp \left ( -\frac{(\langle v,  x \rangle_b - \langle v, y \rangle_b)^2}{2V(v)} \right ) \right ] \right |
\leq
$$

$$
\left  | \frac{1}{\sqrt{2 \pi V(v)} |X||Y|} \sum_{x,y} \left [ \exp \left ( 0 \right )
-
 \exp \left ( -\frac{B^2}{2V(v)} \right ) \right ] \right |
=
$$

$$
\left | {\frac{1}{\sqrt2 \pi V(v) }}  \left [ 1
-
 \exp \left ( -\frac{B^2}{2V(v)} \right ) \right ] \right |.
$$
Let us now assume that we are given some acceptable error $\epsilon \geq 0$. We will show how small bins have to be used based on our dataset and current projection.
$$
\left | {\frac{1}{\sqrt2 \pi V(v) }}  \left [ 1
-
 \exp \left ( -\frac{B^2}{2V(v)} \right ) \right ] \right | \leq \epsilon,
$$
but $\exp \left ( -\frac{B^2}{2V(v)} \right ) \leq 1$, so
$$
{\frac{1}{\sqrt2 \pi V(v) }}  \left [ 1
-
 \exp \left ( -\frac{B^2}{2V(v)} \right ) \right ]  \leq \epsilon,
$$
thus
$$
 \exp \left ( -\frac{B^2}{2V(v)} \right )   \geq 1 - \epsilon \sqrt{2 \pi V(v)}.
$$
Naturally if $1 - \epsilon \sqrt{2 \pi V(v)} < 0$ then any $B$ satisfies this inequality (similarly to the sorting and discarding method, it may only happen if we choose very large acceptable error $\epsilon$) so we introduce maximum function here.

$$
-\frac{B^2}{2V(v)}   \geq \ln \left ( \max \left \{ 0, 1 -  \epsilon \sqrt{2 \pi V(v)}\right \} \right ) 
$$

$$
B \leq \sqrt{ - 2V(v) \ln \left ( \max \left \{ 0,1-  \epsilon \sqrt{2 \pi V(v)} \right \} \right ) }
$$
\end{proof}

Figure~\ref{fig:bounds} shows how these two bounds behave with increasing size of the acceptable error. In particular one can see that both methods have very similar growth (up to the maximization/minimization symmetry) with changing $\epsilon$. As a result, due to the fact that binning is much more aggressive technique we should expect that using these bounds as the actual bin width/discarding threshold will lead to much greater reduction of the computational complexity when using binning.
\begin{figure}[htb]
\includegraphics[width=0.5\textwidth]{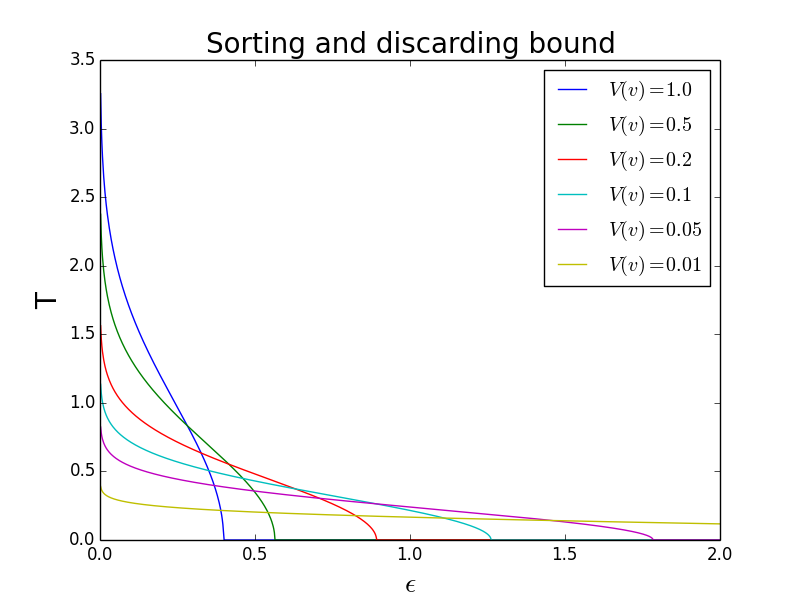}
 \includegraphics[width=0.5\textwidth]{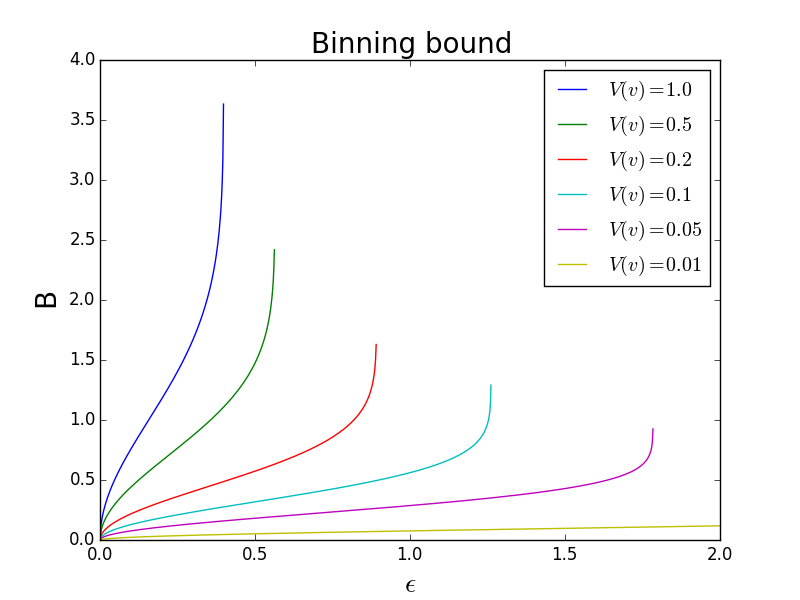}
 \caption{Plots of the values of the discarding threshold (on the left) and bin width (on the right) as the function of the acceptable error $\epsilon$.}
 \label{fig:bounds}
\end{figure}

\section{Out of sphere optimization}

Now we are going to show, that MELC objective function can be efficiently optimized in the whole $R^d$ space by adding some custom regularization term. The importance of this result is the fact that it enables us to use vast amount of existing optimization techniques (such as Adaptive gradient descent, Conjugate Gradients, BFGS, L-BFGS etc.) without adapting them to the sphere constraints. The second important aspect is the fact that this modification does not involve adding any additional constants which have to be fitted. Following theorem describes modified objective function.

\begin{theorem}
 Given arbitrary sets $X_-, X_+ \subset \mathbb{R}^d$ and corresponding $\dcsSymbol(v) = \dcsSymbol(\llbracket \proj{v}{X_-}\rrbracket, \llbracket \proj{v}{X_+} \rrbracket)$ function we have:
 $$
 d := \max_{\| v \| =1} \dcsSymbol(v) =  \max_v \dcsSymbol(v) - (\|v\|^2 - 1)^2 
 $$
 and
 $$
 \{ v : \|v\|=1 \wedge  \dcsSymbol(v) = d\}  = \{ v :  \dcsSymbol(v)  - (\|v\|^2 - 1)^2 = d\}.
 $$
\end{theorem}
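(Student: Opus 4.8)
The plan is to reduce the whole statement to one structural fact about the objective: $\dcsSymbol$ is invariant under positive rescaling of its argument, i.e. $\dcsSymbol(cv)=\dcsSymbol(v)$ for every $c>0$ and $v\neq 0$. I would prove this first. Every variance entering $V(v)$ is the sample variance of a projected coordinate, i.e. a quadratic form $v^T\Sigma v$, so $V(cv)=c^2 V(v)$. Substituting $cv$ into the formula for $\cipSymbol$, the factor $c^2$ cancels inside every exponent $\langle cv,x-y\rangle^2/(2V(cv))$, while the prefactor — which depends on $v$ only through $1/\sqrt{V(v)}$ — picks up exactly one factor $|c|^{-1}$; hence $\cip{cv}=|c|^{-1}\,\cip{v}$ for each of the three pairings $(\pdfn,\pdfp)$, $(\pdfn,\pdfn)$, $(\pdfp,\pdfp)$. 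Plugging this into $\dcsSymbol(v)=-2\log\cip{\pdfn,\pdfp}+\log\cip{\pdfn,\pdfn}+\log\cip{\pdfp,\pdfp}$, the three copies of $\log|c|^{-1}$ appear with coefficients $-2,+1,+1$, whose sum is $0$, so $\dcsSymbol(cv)=\dcsSymbol(v)$. In particular $\dcsSymbol(v)=\dcsSymbol(v/\|v\|)\le d$ for all $v\neq 0$.

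Writing $g(v):=\dcsSymbol(v)-(\|v\|^2-1)^2$, the first equality then follows from two one-line bounds. For "$\ge$": every unit vector satisfies $g(v)=\dcsSymbol(v)$, so $\max_v g(v)\ge\max_{\|v\|=1}\dcsSymbol(v)=d$, and this value is attained at any maximiser of $\dcsSymbol$ on the sphere (which exists by the definition of $d$). For "$\le$": for $v\neq 0$, scale-invariance gives $g(v)=\dcsSymbol(v/\|v\|)-(\|v\|^2-1)^2\le d-0=d$. Hence $\max_v g(v)=d$.

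The set equality splits the same way. The inclusion $\{v:\|v\|=1\wedge\dcsSymbol(v)=d\}\subseteq\{v:g(v)=d\}$ is immediate, since $(\|v\|^2-1)^2$ vanishes on the sphere. Conversely, if $g(v)=d$ then, using $\dcsSymbol(v)=\dcsSymbol(v/\|v\|)\le d$ and $(\|v\|^2-1)^2\ge 0$, we get $d=\dcsSymbol(v)-(\|v\|^2-1)^2\le d-(\|v\|^2-1)^2\le d$, forcing both inequalities to be equalities: $(\|v\|^2-1)^2=0$, i.e. $\|v\|=1$, and $\dcsSymbol(v)=d$. Thus $v$ lies in the left-hand set and the two sets coincide.

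The one point needing care is $v=0$: there the projections collapse, $V(0)=0$, and the displayed formula for $\cipSymbol$ (hence for $\dcsSymbol$) is not literally defined, so "$\max_v$" and the right-hand set should be read over $\mathbb{R}^d\setminus\{0\}$ (equivalently, one checks that no model-consistent value of $\dcsSymbol(0)$ makes $g(0)=\dcsSymbol(0)-1$ reach the sphere maximum). I expect the scale-invariance step to be the main thing to get right — in particular the observation that every term of $V(v)$ is genuinely quadratic in $v$, which is what makes the $|c|^{-1}$ prefactors cancel; once that is in place, the rest is the elementary argument above.
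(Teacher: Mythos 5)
Your proposal is correct and follows essentially the same route as the paper: scale invariance of $\dcsSymbol$ combined with the fact that $-(\|v\|^2-1)^2\leq 0$ with equality exactly on the unit sphere. The only difference is that you prove the scale invariance (via $V(cv)=c^2V(v)$ and the cancellation of the $\log|c|^{-1}$ terms) and spell out the two inequalities and two inclusions, whereas the paper cites the invariance from the original MELC reference and compresses the rest into a short remark; your handling of $v=0$ is a reasonable extra precaution the paper omits.
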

\begin{proof}

 According to~\cite{melc}, $\dcsSymbol$ is scale invariant so for any $v \in \mathbb{R}^d, c \in \mathbb{R}_+$
 $$
 \dcsSymbol(v) = \dcsSymbol(cv).
 $$
  As a result also
 $$
 \dcsSymbol(v) - (\|v\|^2 - 1)^2
 =
 \dcsSymbol(cv) - (\|v\|^2 - 1)^2
 $$ 
 but as $- (\|v\|^2 - 1)^2 \leq 0$ and $- (\|v\|^2 - 1)^2 = 0 \iff \|v\|=1$ we have that 
 $\dcsSymbol(v) - (\|v\|^2 - 1)^2$ is maximized for $v$ with norm $1$ and that it is equal to
 $\dcsSymbol(v)$. As a result sets of solutions of both problems are identical.

\end{proof}

Consequently we can apply any advanced optimization technique which is not designed to work on the sphere to optimize $\dcsSymbol$ criterion. In particular we can use L-BFGS~\cite{byrd1995limited} instead of more complex and less popular RBFGS~\cite{qi2010riemannian} and previously proposed~\cite{melc} less efficient -- gradient descent on sphere method. At the same time the norm of the candidate solution will stay close to $1$ so we will not suffer from numerical problems~\cite{melc}.

It is worth noting that despite similarity to the L$_2$ regularization~\cite{vapnik2000nature} of the additive loss function (or weight decay from neural networks) this additional terms serves no regularization purposes nor it affects the actual function value. It only guides the gradient based optimizers towards more informative regions of the state space.

From the practical point of view we also need a gradient of the new function but thanks to the additivity of derivative operator we get
$$
\nabla \left [ \dcsSymbol(v) - (\| v \|^2-1)^2 \right ] = \left [ \nabla \dcsSymbol(v) \right ] - 4 v ( \langle v, v \rangle - 1 ),
$$
and we can use any optimization software able to maximize a function given ($f, \nabla f)$.
%
%
%
%

%


%

\section{Evaluation}

We evaluate proposed approximations on 10 datasets from UCI repository~\cite{uci} and libSVM's repository~\cite{chang2011libsvm, ho1996building}. Both $\dcsSymbol$ and approximations are coded in Python using numpy and scipy~\cite{jones2001scipy}. We use scipy's optimization module to perform training of all models using two optimization techniques -- Conjugate Gradients (CG) and L-BFGS-B~\cite{byrd1995limited}. Each experiment is performed in cross validation manner with multiple starting points (randomly selected, but constant across methods to achieve comparable results) due to the convergence of MELC optimization to local optima. We analyze $\gamma$ hyperparameter of $\dcsSymbol$ in $[0.1, 0.5, 1.0, 1.5, 2.0]$ and acceptable error $\epsilon \in [0.01, 0.02, 0.03, 0.05, 0.1, 0.2, 0.5]$. Similarly to the original paper we use Balanced Accuracy (BAC\footnote{$\text{BAC}=\tfrac{1}{2}\left ( \tfrac{\text{TP}}{\text{TP}+\text{FN}} + \tfrac{\text{TN}}{\text{TN}+\text{FP}} \right )$}) as the measure of 
classification 
correctness due to MELC highly balanced formulation.

First, we investigate how big is mean reduction of computations using each of the approximating schemes. Table~\ref{tab:ratio} reports mean ratio of exp function calls (which is equivalent to number of pairs analyzed in each $\cipSymbol$ evaluation when optimizing whole $\dcsSymbol$ function and its gradient) in given method to the original implementation.
\begin{table}[]
\begin{center}
\begin{tabular}{lrrrr}
\toprule
method 		       &   \multicolumn{2}{c}{CG}       & \multicolumn{2}{c}{L-BFGS-B}          \\
name 		       &   bin &  dist &       bin &  dist \\
\midrule
australian      &  0.11 &  0.44 &      0.11 &  0.45 \\
breast-cancer   &  0.10 &  0.46 &      0.10 &  0.46 \\
diabetes        &  0.21 &  0.56 &      0.22 &  0.54 \\
fourclass       &  0.19 &  0.51 &      0.19 &  0.49 \\
german.numer    &  0.15 &  0.47 &      0.19 &  0.46 \\
heart           &  0.29 &  0.47 &      0.26 &  0.47 \\
ionosphere      &  0.25 &  0.55 &      0.24 &  0.54 \\
liver-disorders &  0.29 &  0.65 &      0.31 &  0.67 \\
sonar           &  0.32 &  0.53 &      0.29 &  0.50 \\
splice          &  0.19 &  0.44 &      0.16 &  0.43 \\
\bottomrule
\end{tabular}
\caption{Mean ratio of $\exp$ calls between approximated technique and original method during optimizations.}
\label{tab:ratio}
\end{center}
\end{table}

One can easily notice that sorting and discarding method (denoted as "dist") roughly halves the number of analyzed pairs, while binning (denoted as "bin") reduces it 3-10 times. It is an obvious consequence of the fact that binning is much more aggressive method. It appears that strength of reduction depends only on the dataset, not on the optimization algorithm used which suggests, that projections for which particular level of possible reduction  are uniformly distributed over the space of all projections. These effects are also heavily dependent\footnote{we do not include the exact values in the Table for better readability} on the choice of $\gamma$ and $\epsilon$ which is the obvious consequence of  Theorems 1 and 2 saying that with increasing variance (which is proportional to $\gamma^2$) the reduction strength decreases superlinearly.

The set of heat maps in Figure~\ref{fig:comp} shows differences between BAC obtained by the original $\dcsSymbol$ and each approximation for a given dataset and $\gamma, \epsilon$ hyperparameters pair. In general, up to few isolated cases errors are on the level of $0.5\%-3\%$. For small $\gamma$ values errors introduced by the approximation are significantly higher and for sonar and splice datasets can grow to even $10\%$. Fortunately, these are very rare phenomena.
\begin{figure}[htb]
\begin{center}
 \includegraphics[width=0.19\textwidth]{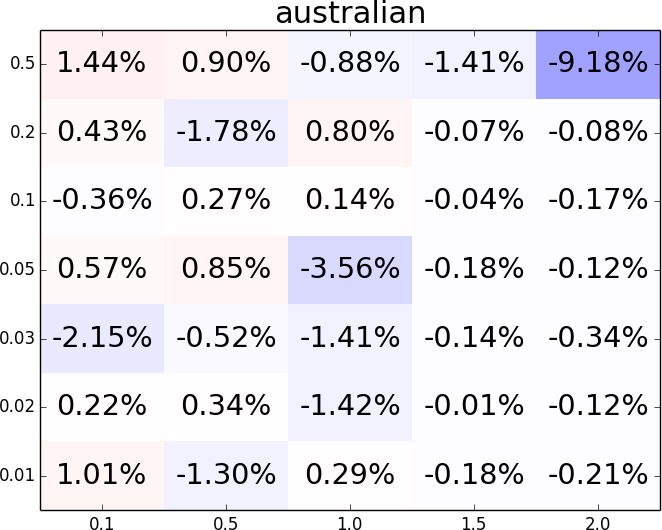}
 \includegraphics[width=0.19\textwidth]{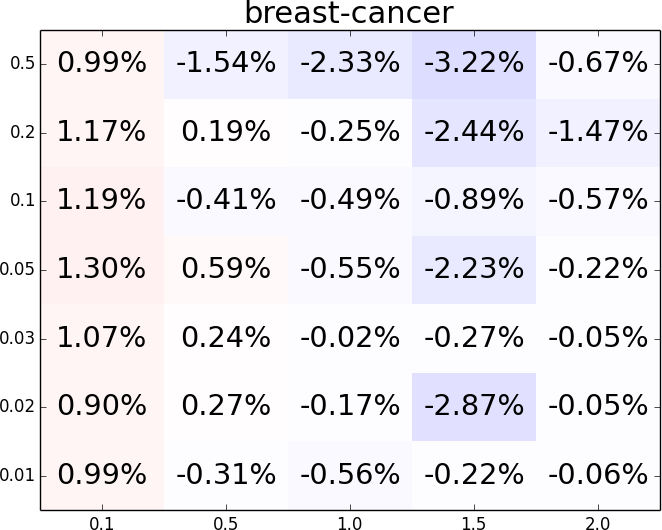}
 \includegraphics[width=0.19\textwidth]{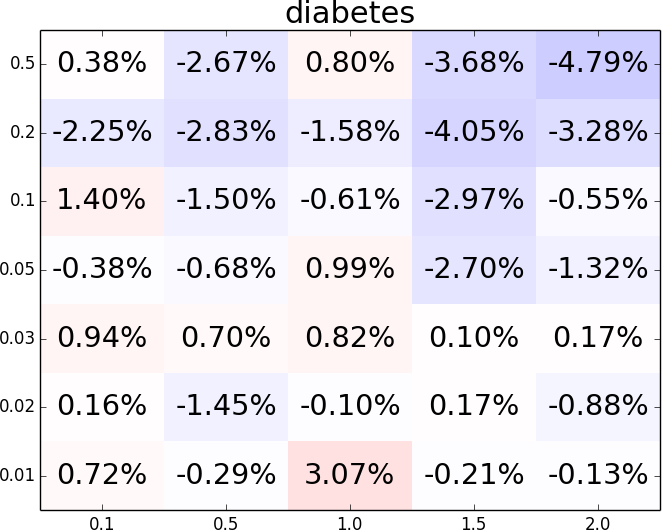}
 \includegraphics[width=0.19\textwidth]{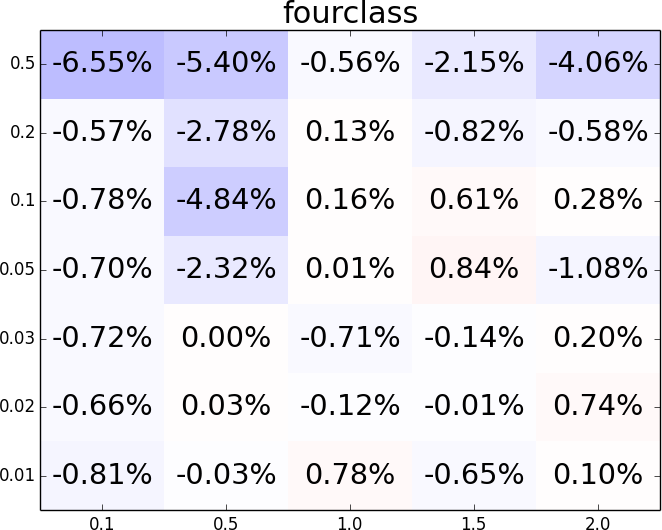}
 \includegraphics[width=0.19\textwidth]{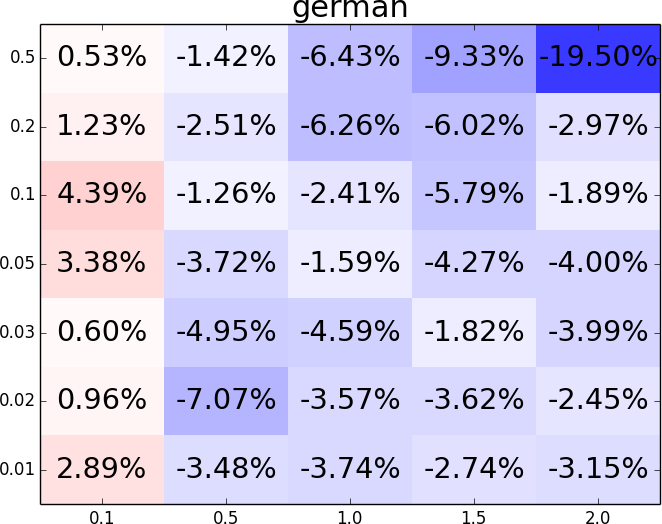}\\
 \includegraphics[width=0.19\textwidth]{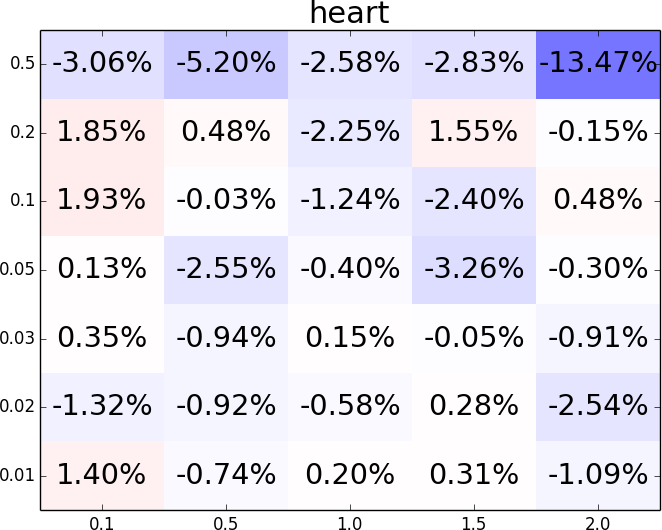}
 \includegraphics[width=0.19\textwidth]{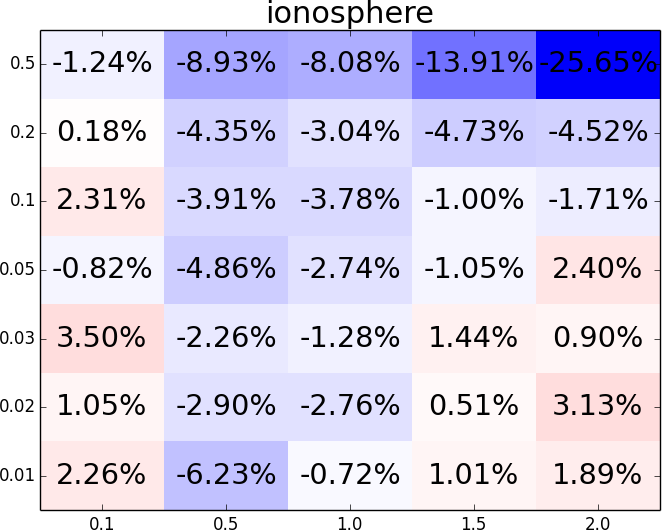}
 \includegraphics[width=0.19\textwidth]{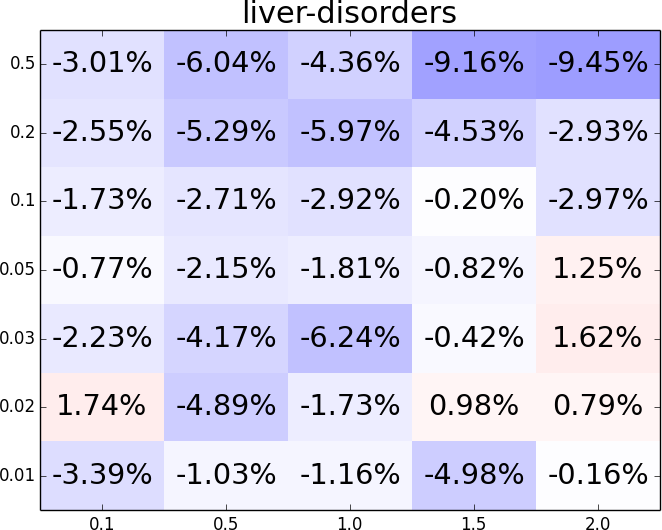}
 \includegraphics[width=0.19\textwidth]{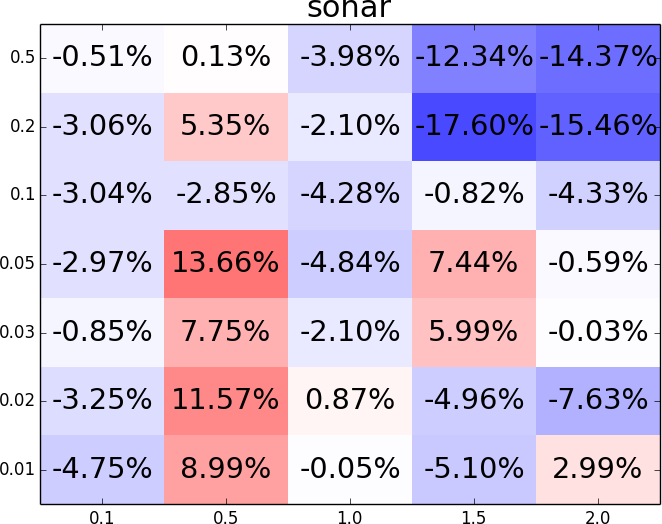}
 \includegraphics[width=0.19\textwidth]{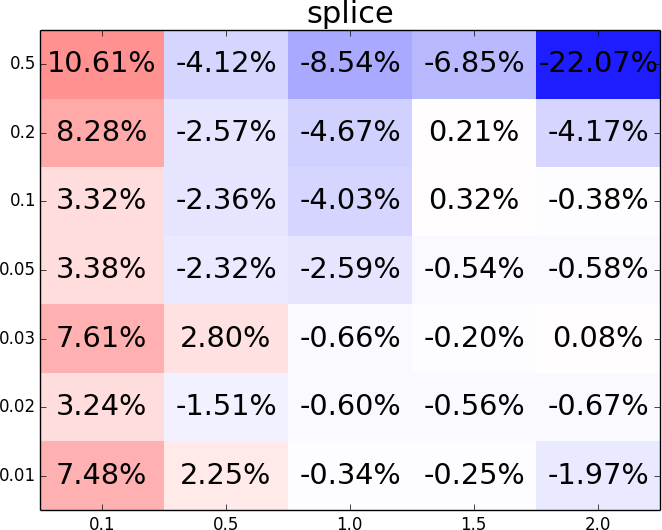}\\
 \vspace{0.5cm}
  \includegraphics[width=0.19\textwidth]{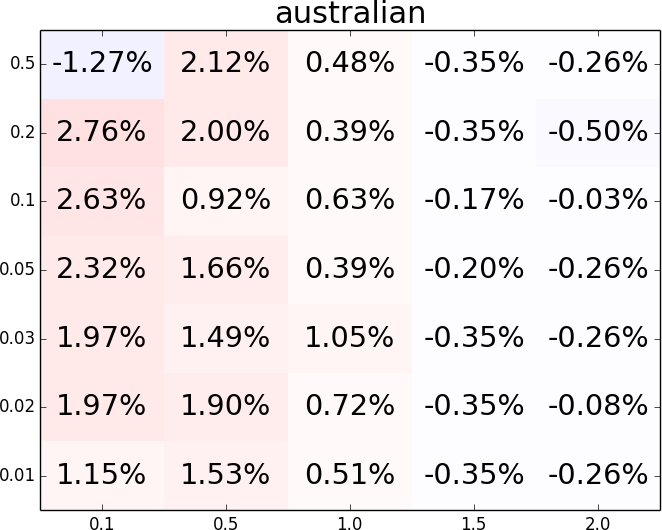}
 \includegraphics[width=0.19\textwidth]{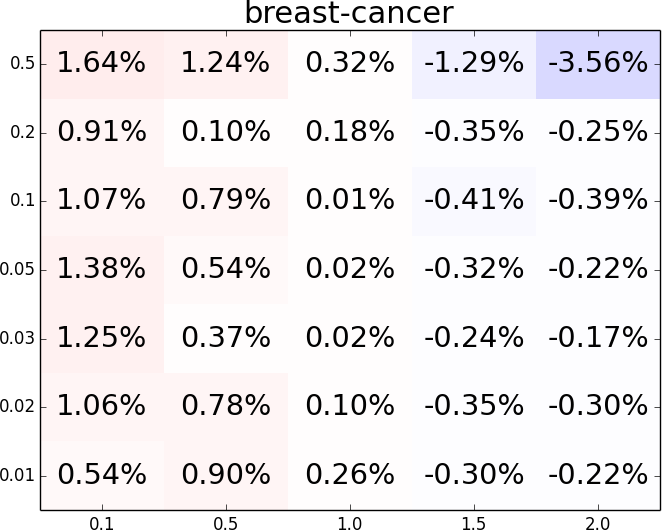}
 \includegraphics[width=0.19\textwidth]{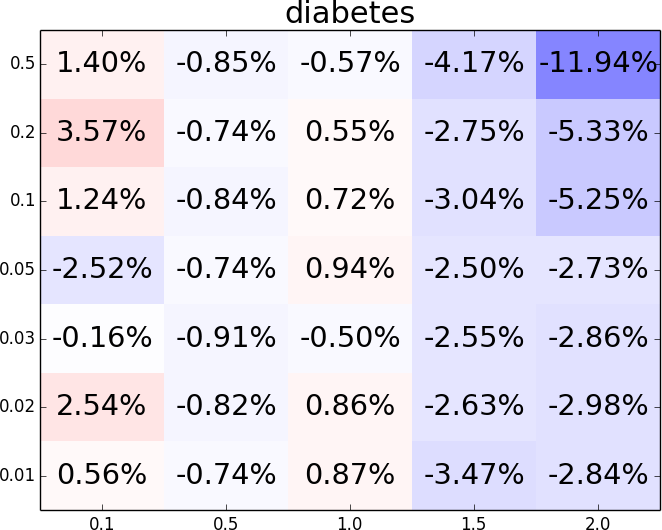}
 \includegraphics[width=0.19\textwidth]{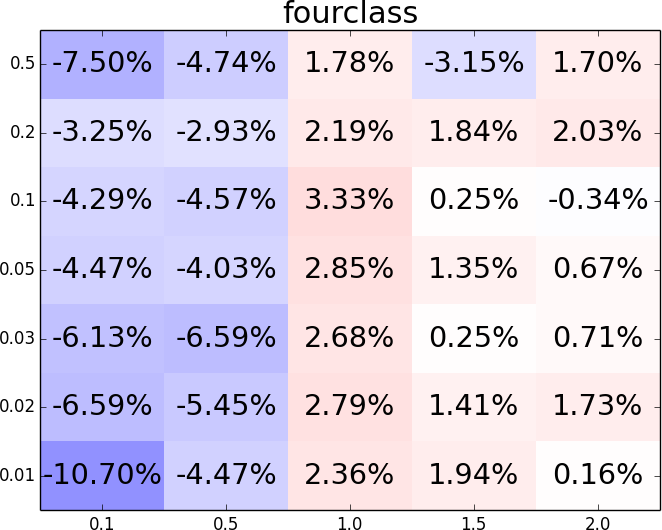}
 \includegraphics[width=0.19\textwidth]{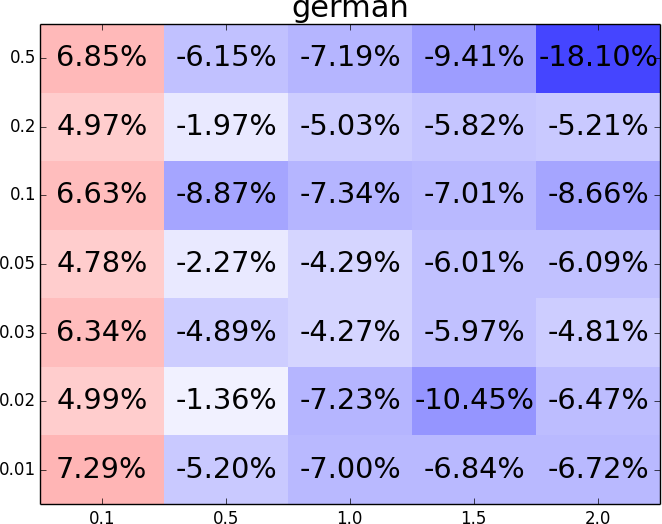}\\
 \includegraphics[width=0.19\textwidth]{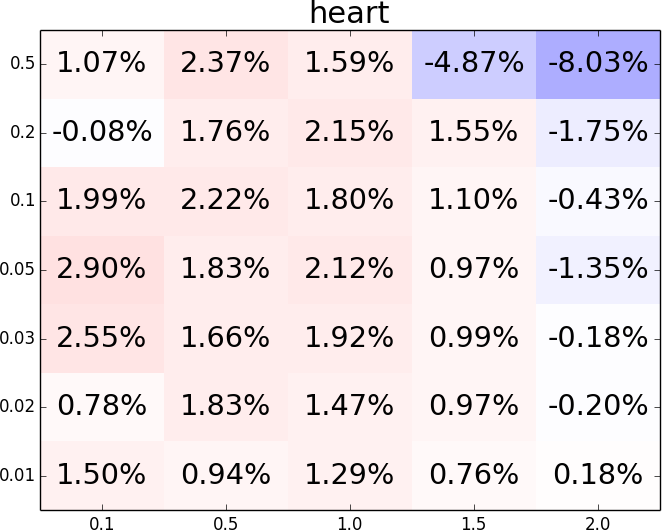}
 \includegraphics[width=0.19\textwidth]{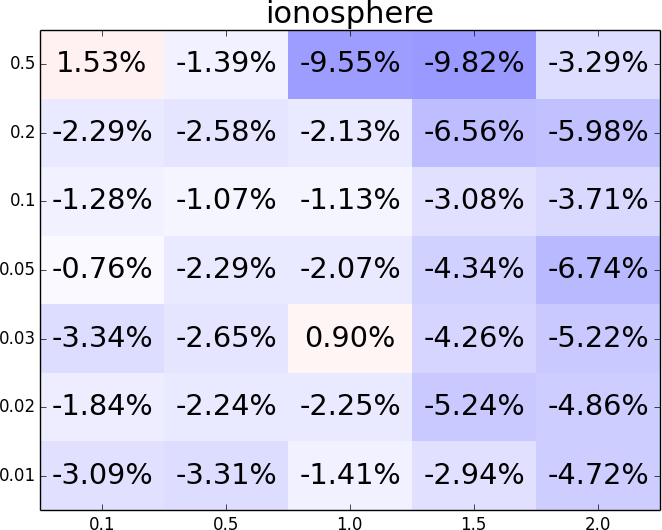}
 \includegraphics[width=0.19\textwidth]{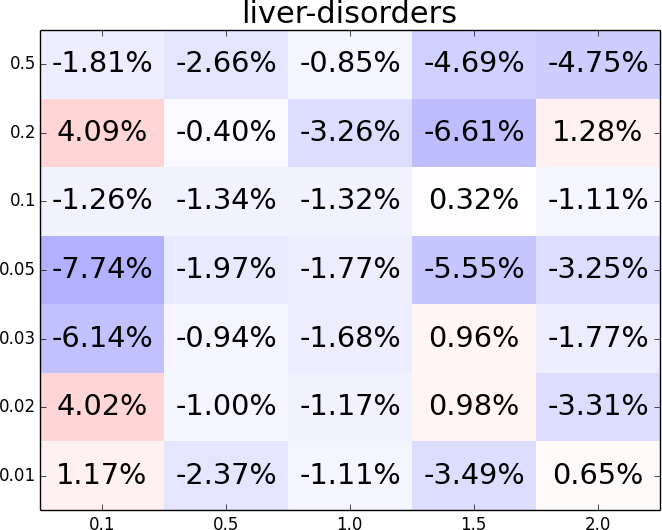}
 \includegraphics[width=0.19\textwidth]{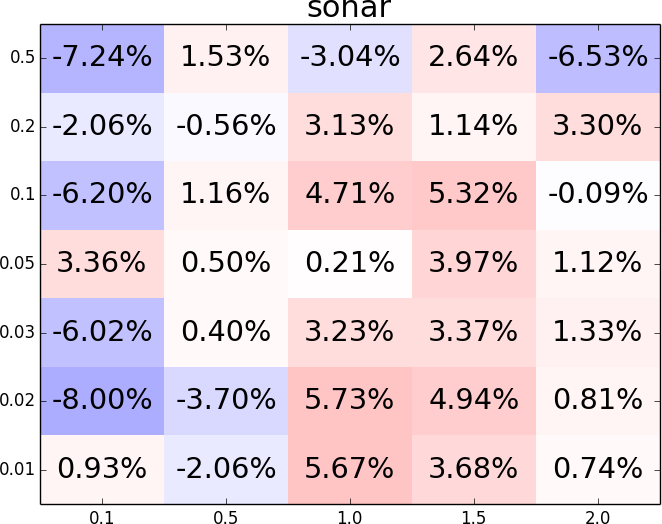}
 \includegraphics[width=0.19\textwidth]{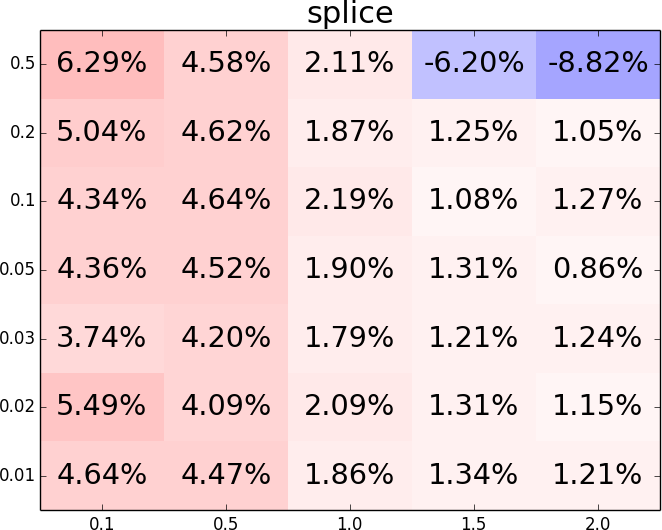}
\end{center}
 \caption{Comparison of the cross validation BAC scores between given approximated strategy (two top rows sorting and discarding, two bottom ones binning), $\gamma$ hyperparameter of $\dcsSymbol$ (x-axis), accepted error $\epsilon$ (y-axis). Positive values (and corresponding red colors) represent decrease in BAC score while negative values and corresponding blue colors -- increase after using approximated method.}
 \label{fig:comp}
\end{figure}
Even more interesting is the fact that for many experiments we actually noticed increase in the BAC score (bluish elements). This might be the consequence of more rough evaluation of the function (and gradient) values leading to optimization less prone to falling into local maxima. Our hypothesis is that it acts like a regularization helping to train MELC model.

%

Analysis of the number of iterations of each optimization method required to converge (see Table~\ref{tab:convergence}) shows that both approximations significantly simplify the problem. It is important to notice that the number of iterations is not the number of $\dcsSymbol$ function evaluations (as both Conjugate Gradients and L-BFGS-B evaluate it multiple times in each iteration, especially during line searches). Consequently, number of iterations cannot be used as a measure of optimization speed but it says much about the complexity of the function being maximized.
\begin{table}[htb]
\begin{center}
\begin{tabular}{lrrrrrrr}
\toprule
method &   \multicolumn{3}{c}{CG} & &  \multicolumn{3}{c}{L-BFGS-B}       \\
name &  bin &  $\dcsSymbol$ &  dist &     &  bin &  $\dcsSymbol$ &  dist \\
\midrule
australian      &    4 &   36 &    22 &    &    11 &   39 &    37 \\
breast-cancer   &    4 &   35 &     8 &    &     6 &   39 &    14 \\
diabetes        &    3 &   30 &    20 &    &    18 &   36 &    29 \\
fourclass       &    4 &   12 &    10 &    &     6 &   15 &    14 \\
german.numer    &    7 &   60 &    32 &    &     7 &   58 &    38 \\
heart           &    3 &   40 &    19 &    &    12 &   34 &    20 \\
ionosphere      &    5 &  600 &   216 &    &    18 &  384 &   152 \\
liver-disorders &    4 &   30 &    22 &    &    22 &   43 &    30 \\
sonar           &    4 &  262 &   115 &    &    15 &  139 &   100 \\
splice          &    4 &   92 &    26 &    &    14 &   65 &    41 \\
\bottomrule
\end{tabular}
\caption{Number of optimization methods' iterations.}
\label{tab:convergence}
\end{center}
\end{table}
This seems to confirm our claim that approximation works similar to the regularization and thus it reduces small irregularities of the error surface due to the removal of small elements from the $\cipSymbol$ internal summation.

Experiments also showed importance on the regularization technique added to perform out of sphere optimization. During maximization of $\dcsSymbol$ in sonar and german datasets, norms of $v$ rapidly grew to over $1000$ if we turn off this modification and still use CG/L-BFGS-B. As a result the optimization problem became extremely hard and we needed tens of thousands $\dcsSymbol$ evaluation in order to converge. Adding regularizing term reduced the norm to nearly $1$ and number of required function calls by two orders of magnitude.

\section{Conclusions}

In this paper we proposed two simple approximation schemes for faster computation of MELC objective function and its gradient. We proved that in order to achieve constant error bound during optimization one needs a specific adaptive strategy for each of them and gave a simple, closed form equations for setting required parameters based on the user-specified acceptable level of error in the $\cipSymbol$ function value. We also showed how one can easily change the objective function in order to use wide range of existing optimizers while at the same time still work near the unit sphere which, as described in the MELC theory~\cite{melc}, is important from the numerical point of view.

During extensive evaluation we confirmed that such approach is valid in terms of reducing the mean number of $\exp$ calls by even an order of magnitude while not sacrificing the resulting classifiers accuracy. In fact the experiments suggest that proposed method acts like some kind of regularization which might not only simplify the optimization problem but also slightly increase the obtained results.

\bibliographystyle{plain}
\bibliography{biblio}

\begin{thebibliography}{10}

\bibitem{anthony}
Martin Anthony.
\newblock Partitioning points by parallel planes.
\newblock {\em Discrete mathematics}, 282(1):17--21, 2004.

\bibitem{uci}
Catherine Blake and Christopher~J Merz.
\newblock $\{$UCI$\}$ repository of machine learning databases.
\newblock 1998.

\bibitem{byrd1995limited}
Richard~H Byrd, Peihuang Lu, Jorge Nocedal, and Ciyou Zhu.
\newblock A limited memory algorithm for bound constrained optimization.
\newblock {\em SIAM Journal on Scientific Computing}, 16(5):1190--1208, 1995.

\bibitem{chang2011libsvm}
Chih-Chung Chang and Chih-Jen Lin.
\newblock Libsvm: a library for support vector machines.
\newblock {\em ACM Transactions on Intelligent Systems and Technology (TIST)},
  2(3):27, 2011.

\bibitem{melc}
Wojciech~Marian Czarnecki and Jacek Tabor.
\newblock Multithreshold entropy linear classifier: Theory and applications.
\newblock {\em Expert Systems with Applications}, 2015.

\bibitem{elgammal2003efficient}
Ahmed Elgammal, Ramani Duraiswami, and Larry~S Davis.
\newblock Efficient kernel density estimation using the fast gauss transform
  with applications to color modeling and tracking.
\newblock {\em IEEE Transactions on Pattern Analysis and Machine Intelligence},
  25(11):1499--1504, 2003.

\bibitem{ho1996building}
Tin~Kam Ho and Eugene~M Kleinberg.
\newblock Building projectable classifiers of arbitrary complexity.
\newblock In {\em Pattern Recognition, 1996., Proceedings of the 13th
  International Conference on}, volume~2, pages 880--885. IEEE, 1996.

\bibitem{jones2001scipy}
Eric Jones, Travis Oliphant, and Pearu Peterson.
\newblock Scipy: Open source scientific tools for python.
\newblock {\em http://www. scipy. org/}, 2001.

\bibitem{principe2010information}
Jose~C Principe.
\newblock {\em Information theoretic learning: R{\'e}nyi's entropy and kernel
  perspectives}.
\newblock Springer, 2010.

\bibitem{qi2010riemannian}
Chunhong Qi, Kyle~A Gallivan, and P-A Absil.
\newblock Riemannian bfgs algorithm with applications.
\newblock In {\em Recent advances in optimization and its applications in
  engineering}, pages 183--192. Springer, 2010.

\bibitem{silverman}
Bernard~W Silverman.
\newblock {\em Density estimation for statistics and data analysis}, volume~26.
\newblock CRC press, 1986.

\bibitem{silverman1982algorithm}
BW~Silverman.
\newblock Algorithm as 176: Kernel density estimation using the fast fourier
  transform.
\newblock {\em Applied Statistics}, pages 93--99, 1982.

\bibitem{vapnik2000nature}
Vladimir Vapnik.
\newblock {\em The nature of statistical learning theory}.
\newblock springer, 2000.

\bibitem{yang2003improved}
Changjiang Yang, Ramani Duraiswami, Nail~A Gumerov, and Larry Davis.
\newblock Improved fast gauss transform and efficient kernel density
  estimation.
\newblock In {\em Proceedings of the Ninth IEEE International Conference on
  Computer Vision}, pages 664--671. IEEE, 2003.

\end{thebibliography}
\end{document}